\documentclass[letterpaper,11pt]{article}
\usepackage[singlespacing]{setspace}
\usepackage[margin=1in]{geometry}
\usepackage[T1]{fontenc}
\usepackage[utf8]{inputenc}

\usepackage{fullpage}
\usepackage{graphicx}
\usepackage{amsmath,amsthm,amssymb}
\usepackage{algorithm}
\usepackage{algorithmic}
\usepackage{wrapfig}
\usepackage{xcolor}
\usepackage{natbib}

\newtheorem{theorem}{Theorem}
\newtheorem{lemma}[theorem]{Lemma}

\def\squareforqed{\hbox{\rule{2.5mm}{2.5mm}}}

\def\QED{\ifmmode\squareforqed 
  \else{\nobreak\hfil   
    \penalty50                 
    \hskip1em                  
    \null                      
    \nobreak                   
    \hfil                      
    \squareforqed              
    \parfillskip=0pt           
    \finalhyphendemerits=0     
    \endgraf}                  
  \fi}

\def\blksquare{\rule{2mm}{2mm}}
\def\qedsymbol{\blksquare}
\newcommand{\bgg}[1]{\medskip\noindent{\bf #1}}
\newcommand{\ed}{{\hfill\qedsymbol}\medskip}

\newcommand{\field}[1]{\mathbb{#1}}
\newcommand{\R}{\field{R}}

\newcommand{\inner}[1]{ \left\langle {#1} \right\rangle }

\newcommand{\Reg}{\text{\rm Reg}}

\newlength{\minipagewidth}
\begin{document}

\title{A Short Note on a Variant of the Squint Algorithm}
\author{Haipeng Luo\\
    University of Southern California
    }
\date{}
\maketitle

\abstract{
This short note describes a simple variant of the Squint algorithm of~\citet{koolen2015second} for the classic expert problem.
Via an equally simple modification of their proof, we prove that this variant ensures a regret bound that resembles the one shown in a recent work by~\citet{freund2026second} for a variant of the NormalHedge algorithm~\citep{chaudhuri2009parameter}.
}

\section{Expert Problem and Squint}
In the classic expert problem, a learner interacts with an adversary for $T$ rounds.
In each round $t=1, \ldots, T$, 
the learner decides a distribution $p_t \in \Delta_N$ over $N$ experts,
and the adversary decides the loss $\ell_t \in [0,1]$.
The learner then suffers loss $\inner{p_t, \ell_t}$ and observes $\ell_t$.
For any $\epsilon \in [1/N,1)$, the $\epsilon$-quantile regret of the learner is defined as
\[
\Reg_\epsilon = \sum_{t=1}^T \inner{p_t, \ell_t} - \sum_{t=1}^T \ell_{t, i_\epsilon}
\]
where $i_\epsilon$ is the $\lfloor \epsilon N \rfloor$-th best expert (according to their cumulative loss over $T$ rounds).
When $\epsilon=1/N$, $\Reg_\epsilon$ reduces to the standard external regret that compares the total loss of the learner to that of the best expert in hindsight.

Define the instantaneous regret vector $r_t$ as $\inner{p_t, \ell_t}\mathbf{1} - \ell_t$ (where $\mathbf{1}\in\R^{N}$ is the all-one vector) and the cumulative regret vector as $R_t = \sum_{s=1}^t r_t$.
Further define the Squint potential as
\[ 
\Phi(R, V) = \int_0^{1/2} \frac{e^{\eta R - \eta^2 V}-1}{\eta} d\eta 
\]
for $R \in \R$ and $V \in \R^+$, which satisfies the following property.

\begin{lemma}\label{lem:squint}
For any $x \in [-1,1], R\in \R$ and $V \in \R^+$, we have 
\[ \Phi(R+x, V+x^2) \leq \Phi(R, V) + \frac{\partial\Phi}{\partial R}(R,V) \cdot x. \]
\end{lemma}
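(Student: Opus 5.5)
The plan is to reduce the claim to a pointwise inequality in $\eta$ and then integrate. Write
\[
\Phi(R+x,V+x^2)-\Phi(R,V)=\int_0^{1/2}\frac{e^{\eta R-\eta^2 V}\bigl(e^{\eta x-\eta^2x^2}-1\bigr)}{\eta}\,d\eta .
\]
The ``$-1$'' terms in the numerator cancel, so only the exponential parts remain. Differentiating under the integral sign (justified since the integrand is smooth in $R$ and the domain is bounded) gives
\[
\frac{\partial\Phi}{\partial R}(R,V)=\int_0^{1/2} e^{\eta R-\eta^2V}\,d\eta .
\]
Hence
\[
\frac{\partial\Phi}{\partial R}(R,V)\cdot x=\int_0^{1/2}\frac{e^{\eta R-\eta^2V}\,\eta x}{\eta}\,d\eta .
\]
Subtracting, it suffices to show the integrand inequality
\[
e^{\eta R-\eta^2V}\bigl(e^{\eta x-\eta^2x^2}-1-\eta x\bigr)\le 0
\]
for every $\eta\in(0,1/2]$. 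Since $e^{\eta R-\eta^2V}>0$ and $\eta>0$, this reduces to the scalar claim $e^{y-y^2}\le 1+y$ for $y=\eta x\in[-1/2,1/2]$.

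The key step is therefore the elementary inequality $e^{y-y^2}\le 1+y$ for $|y|\le 1/2$. Since $1+y>0$ there, this is equivalent to $y-y^2\le \ln(1+y)$. Let $g(y)=\ln(1+y)-y+y^2$. Then $g(0)=0$ and
\[
g'(y)=\frac{1}{1+y}-1+2y=\frac{y(1+2y)}{1+y}.
\]
On $[-1/2,0]$ we have $g'\le 0$, and on $[0,1/2]$ we have $g'\ge 0$. So $g$ attains its minimum at $0$, giving $g\ge 0$ on $[-1/2,1/2]$. (In fact this holds for all $y\ge -1/2$.)

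This step is the only real obstacle, and it is also what dictates the upper limit $1/2$ of the integral: it ensures $|\eta x|\le 1/2$, which keeps $1+2y\ge 0$. Integrating the nonpositive integrand over $\eta\in(0,1/2]$ then yields the lemma. The integrals converge near $\eta=0$ because each integrand in the original definition is bounded there (the factor $(e^{\cdot}-1)/\eta$ has a finite limit).
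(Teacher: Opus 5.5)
Your proof is correct and follows essentially the same route as the paper: you write the integrand of $\Phi(R+x,V+x^2)$ as $e^{\eta R-\eta^2V}e^{\eta x-\eta^2x^2}$, apply $e^{y-y^2}\le 1+y$ pointwise with $y=\eta x\ge -\tfrac12$, and integrate over $\eta\in(0,1/2]$. Your verification of that scalar inequality through $g(y)=\ln(1+y)-y+y^2$ is a correct and welcome detail, since the paper only states the inequality.
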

\begin{proof}
By definition, one has
\begin{align*} 
\Phi(R+x, V+x^2) &= \int_0^{1/2} \frac{e^{\eta R - \eta^2 V}e^{\eta x - \eta^2x^2}-1}{\eta} d\eta \\
&\leq \int_0^{1/2} \frac{e^{\eta R - \eta^2 V}(1+\eta x)-1}{\eta} d\eta = \Phi(R, V) + \frac{\partial\Phi}{\partial R}(R,V) \cdot x, 
\end{align*}
where we use the inequality $e^{y-y^2} \leq 1+y$ for any $y\geq -\tfrac{1}{2}$.
\end{proof}

\paragraph{Squint Algorithm and Guarantee}
The original Squint algorithm~\citep{koolen2015second} predicts $p_t \in \Delta_N$ at time $t$ such that
\begin{equation}
p_{t,i} \propto \frac{\partial \Phi}{\partial R}(R_{t-1,i}, V_{t-1,i}), 
\tag{original Squint}
\end{equation}
where $V_{t,i} = \sum_{s=1}^t v_{t,i}$ and $v_{t,i} = r_{t,i}^2$.
The following lemma shows that the sum of the potential of the algorithm never increases, which is the key for the analysis.

\begin{lemma}[Lemma 1 of~\citet{koolen2015second}]\label{lem:Squint_potential_lemma}
The Squint algorithm ensures 
\[
\sum_{i=1}^N \phi(R_{T,i}, V_{T,i})
\leq \sum_{i=1}^N \phi(R_{T-1,i}, V_{T-1,i})
\leq \cdots \leq \sum_{i=1}^N \phi(R_{0,i}, V_{0,i}) = 0
\]
\end{lemma}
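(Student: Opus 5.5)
The plan is to prove a single step of the chain, $\sum_i \Phi(R_{t,i},V_{t,i}) \le \sum_i \Phi(R_{t-1,i},V_{t-1,i})$ for every $t\ge 1$, and then telescope. Here $\phi$ in the statement is read as the potential $\Phi$. The final equality $\sum_i \Phi(R_{0,i},V_{0,i})=0$ is immediate: $R_{0,i}=0$ and $V_{0,i}=0$ make the integrand $(e^0-1)/\eta$ vanish identically.

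For the one-step inequality, fix $t$ and an expert $i$. We have $R_{t,i}=R_{t-1,i}+r_{t,i}$ and $V_{t,i}=V_{t-1,i}+r_{t,i}^2$. Since $p_t\in\Delta_N$ and $\ell_t\in[0,1]^N$, both $\inner{p_t,\ell_t}$ and $\ell_{t,i}$ lie in $[0,1]$, so $r_{t,i}=\inner{p_t,\ell_t}-\ell_{t,i}\in[-1,1]$. Lemma~\ref{lem:squint} therefore applies with $R=R_{t-1,i}$, $V=V_{t-1,i}\ge 0$ and $x=r_{t,i}$, giving
\[
\Phi(R_{t,i},V_{t,i}) \le \Phi(R_{t-1,i},V_{t-1,i}) + w_{t,i}\, r_{t,i},
\qquad
w_{t,i} := \frac{\partial\Phi}{\partial R}(R_{t-1,i},V_{t-1,i}).
\]

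Summing over $i$, it remains to show $\sum_i w_{t,i} r_{t,i}\le 0$; in fact it equals zero. The weight $w_{t,i}=\int_0^{1/2} e^{\eta R_{t-1,i}-\eta^2 V_{t-1,i}}\,d\eta$ is strictly positive, so $W_t=\sum_i w_{t,i}>0$ and the Squint rule is exactly $p_{t,i}=w_{t,i}/W_t$. Then
\[
\sum_i w_{t,i} r_{t,i} = W_t\sum_i p_{t,i}\bigl(\inner{p_t,\ell_t}-\ell_{t,i}\bigr) = W_t\bigl(\inner{p_t,\ell_t}-\inner{p_t,\ell_t}\bigr)=0.
\]
This gives the one-step inequality, and induction over $t$ gives the whole chain.

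There is no real obstacle here. The only points to check are that $r_{t,i}\in[-1,1]$, so that Lemma~\ref{lem:squint} is applicable, and that the normalizer $W_t$ is positive, so that $p_t$ is well defined and the regret terms cancel exactly. I will also note that differentiating under the integral to obtain $\partial\Phi/\partial R$ is justified because the integrand is smooth on the compact interval $[0,1/2]$; indeed $(e^{\eta R-\eta^2V}-1)/\eta$ extends continuously to $\eta=0$.
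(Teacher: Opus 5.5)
Your proposal is correct and follows essentially the same route as the paper. Like the paper, you apply Lemma~\ref{lem:squint} to each expert with $x=r_{t,i}\in[-1,1]$, sum over $i$, cancel the linear term using $\sum_i p_{t,i}r_{t,i}=0$ together with $p_{t,i}\propto\frac{\partial\Phi}{\partial R}(R_{t-1,i},V_{t-1,i})$, and then telescope down to $\Phi(0,0)=0$.
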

\begin{proof}
For any $t$, we have
\begin{align*}
\sum_{i=1}^N \Phi(R_{t,i}, V_{t,i}) 
&= \sum_{i=1}^N \Phi(R_{t-1,i}+r_{t,i}, V_{t-1,i}+r_{t,i}^2)  \\
&\leq \sum_{i=1}^N \Phi(R_{t-1,i}, V_{t-1,i}) + \frac{\partial \Phi}{\partial R}(R_{t-1,i}, V_{t-1,i}) \cdot r_{t,i} \tag{Lemma~\ref{lem:squint}} \\
&= \sum_{i=1}^N \Phi(R_{t-1,i}, V_{t-1,i}),
\end{align*}    
where the last equality is because $\sum_{i=1}^N p_{t,i} r_{t,i} = 0$ and $p_{t,i} \propto \frac{\partial \Phi}{\partial R}(R_{t-1,i}, V_{t-1,i})$.
\end{proof}

Solely based on the fact $\sum_{i=1}^N \phi(R_{T,i}, V_{T,i}) \leq 0$,
\citet[Theorem~4]{koolen2015second} then prove the following regret bound for Squint, which holds simultaneously for all $\epsilon$:
\[
\Reg_\epsilon \leq \sqrt{2 V_{T, i_\epsilon}}
\left(
1 + \sqrt{ 2 \ln\!\left( \frac{\frac{1}{2} + \ln(T+1)}{\epsilon} \right) }
\right)
+ 5 \ln\!\left(
1 + \frac{1 + 2 \ln(T+1)}{\epsilon}
\right).
\]

\section{A Variant of Squint}
Now, consider the following variant of Squint:
at time $t$,  predicts $p_t \in \Delta_N$ such that
\begin{equation} 
p_{t,i} \propto \frac{\partial \Phi}{\partial R}(R_{t-1,i}, {\color{red}V_{t-1}}),  \tag{Squint varaint}
\end{equation}
where $V_{t} = \sum_{s=1}^t v_{t}$, {\color{red}$v_{t} = \sum_{i=1}^N q_{t,i} r_{t,i}^2$}, and
\[
q_{t,i} \propto -\frac{\partial \Phi}{\partial V}(R_{t,i}, V_{t}) = \frac{\partial^2 \Phi}{\partial R^2}(R_{t,i}, V_{t}) = \int_0^{1/2} \eta e^{\eta R_{t,i} - \eta^2 V_t} d\eta.
\]
Although the definition of $v_t$ is recursive (since it depends on $q_t$, which itself depends on $v_t$), one can still find $v_t$ efficiently via a simple line search.
Specifically,
note that $v_t$ is the root of the function 
\[
f(v) = \sum_{i=1}^N \frac{\partial^2 \Phi}{\partial R^2}(R_{t,i}, V_{t-1}+v)\left(v - r_{t,i}^2\right).
\]
The facts that $f(v)$ is continuous, $f(0)\leq 0$, and $f(1) \geq 0$ imply that a root of $f$ must exist because $0$ and $1$ and that it can be found via binary search.

\paragraph{Analysis}
Via a simple modification of the proof of Lemma~\ref{lem:Squint_potential_lemma}, 
one can show that the sum of the potential of this Squint variant also never increases.

\begin{lemma}
The Squint variant ensures 
\[
\sum_{i=1}^N \phi(R_{T,i}, V_{T})
\leq \sum_{i=1}^N \phi(R_{T-1,i}, V_{T-1})
\leq \cdots \leq \sum_{i=1}^N \phi(R_{0,i}, V_{0}) = 0
\]
\end{lemma}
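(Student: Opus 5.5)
The plan is to prove the single-step inequality $\sum_i \Phi(R_{t,i}, V_t) \le \sum_i \Phi(R_{t-1,i}, V_{t-1})$ for each $t$ and then chain these inequalities. The base case is immediate, since $\Phi(0,0)=0$. The difficulty is that $V_t = V_{t-1}+v_t$ now adds the shared increment $v_t$ to every expert, rather than each expert's own $r_{t,i}^2$. Lemma~\ref{lem:squint} therefore cannot be applied directly. I will split the step into two parts: first handle the change in $R$, then the change in $V$.

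\textbf{Step 1: pass from $V_t$ to per-expert variances.} For each $i$, compare $\Phi(R_{t,i}, V_{t-1}+v_t)$ with $\Phi(R_{t,i}, V_{t-1}+r_{t,i}^2)$. For fixed $R$, the map $V \mapsto \Phi(R,V)$ is convex, because $\partial^2\Phi/\partial V^2 = \int_0^{1/2} \eta^3 e^{\eta R-\eta^2 V}\,d\eta \ge 0$. By the tangent-line inequality for convex functions at the point $V_t$,
\[
\Phi(R_{t,i}, V_{t-1}+r_{t,i}^2) \ge \Phi(R_{t,i}, V_t) + \frac{\partial \Phi}{\partial V}(R_{t,i}, V_t)\,(r_{t,i}^2 - v_t).
\]
Summing over $i$ and using $-\frac{\partial\Phi}{\partial V}(R_{t,i},V_t) \propto q_{t,i}$, the correction term is proportional to $\sum_i q_{t,i}(r_{t,i}^2 - v_t)$. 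This vanishes by the fixed-point definition $v_t = \sum_i q_{t,i} r_{t,i}^2$, which is exactly the root condition $f(v_t)=0$. Hence
\[
\sum_i \Phi(R_{t,i}, V_t) \le \sum_i \Phi(R_{t,i}, V_{t-1}+r_{t,i}^2).
\]
Evaluating the tangent at $V_t$, rather than at $V_{t-1}+r_{t,i}^2$, is what makes the implicit definition of $q_t$, which uses $V_t$, line up with the convexity direction.

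\textbf{Step 2: apply Lemma~\ref{lem:squint}.} Since $r_{t,i}\in[-1,1]$, Lemma~\ref{lem:squint} with $R=R_{t-1,i}$, $V=V_{t-1}$ and $x=r_{t,i}$ gives
\[
\Phi(R_{t-1,i}+r_{t,i}, V_{t-1}+r_{t,i}^2) \le \Phi(R_{t-1,i},V_{t-1}) + \frac{\partial\Phi}{\partial R}(R_{t-1,i},V_{t-1})\, r_{t,i}.
\]
Summing over $i$, the linear term vanishes exactly as in the proof of Lemma~\ref{lem:Squint_potential_lemma}. The reason is that $p_{t,i}\propto \frac{\partial\Phi}{\partial R}(R_{t-1,i},V_{t-1})$ and $\sum_i p_{t,i} r_{t,i}=0$. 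Combining the two steps yields the per-round inequality, and induction on $t$ completes the proof.

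The main obstacle is Step 1, specifically getting the correct sign. One needs convexity rather than concavity in $V$, and the tangent point must be $V_t$. I would verify the sign of $\partial^2\Phi/\partial V^2$ directly by differentiating under the integral. I would also confirm that the normalization of $q_t$ is a positive constant, which holds because the integrand $\eta e^{\eta R-\eta^2V}$ is positive. Together these ensure that $\sum_i q_{t,i}(r_{t,i}^2-v_t)=0$ indeed kills the cross term.
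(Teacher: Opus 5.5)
Your proposal is correct and follows the paper's proof essentially step for step. The paper also uses convexity of $\Phi$ in $V$ with the tangent taken at $V_t$, so that the fixed-point definition $v_t=\sum_i q_{t,i}r_{t,i}^2$, with $q_{t,i}\propto -\frac{\partial\Phi}{\partial V}(R_{t,i},V_t)$, cancels the correction term; it then applies Lemma~\ref{lem:squint} together with $\sum_i p_{t,i}r_{t,i}=0$, and your explicit checks of $\partial^2\Phi/\partial V^2\ge 0$ and of the positive normalizer of $q_t$ are exactly the facts the paper uses implicitly.
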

\begin{proof}
For any $t$, we have
\begin{align*}
\sum_{i=1}^N \Phi(R_{t,i}, V_t) 
&\leq \sum_{i=1}^N \Phi(R_{t,i}, V_{t-1}+r_{t,i}^2) + \frac{\partial \Phi}{\partial V}(R_{t,i}, V_t)(v_t - r_{t,i}^2) \tag{convexity of $\Phi$ in $V$} \\
&= \sum_{i=1}^N \Phi(R_{t,i}, V_{t-1}+r_{t,i}^2)  \tag{definition of $v_t$} \\
&= \sum_{i=1}^N \Phi(R_{t-1,i}+r_{t,i}, V_{t-1}+r_{t,i}^2)  \\
&\leq \sum_{i=1}^N \Phi(R_{t-1,i}, V_{t-1}) + \frac{\partial \Phi}{\partial R}(R_{t-1,i}, V_{t-1}) \cdot r_{t,i} \tag{Lemma~\ref{lem:squint}} \\
&= \sum_{i=1}^N \Phi(R_{t-1,i}, V_{t-1}),
\end{align*}
where the last equality is again because $\sum_{i=1}^N p_{t,i} r_{t,i} = 0$ and $p_{t,i} \propto \frac{\partial \Phi}{\partial R}(R_{t-1,i}, V_{t-1})$.
\end{proof}

Therefore, repeating the exact same arguments of~\citet[Theorem~4]{koolen2015second}, we obtain the following theorem.
\begin{theorem}\label{thm:main}
The $\epsilon$-quantile regret of the Squint variant satisfies:
\[
\Reg_\epsilon \leq \sqrt{2 V_{T}}
\left(
1 + \sqrt{ 2 \ln\!\left( \frac{\frac{1}{2} + \ln(T+1)}{\epsilon} \right) }
\right)
+ 5 \ln\!\left(
1 + \frac{1 + 2 \ln(T+1)}{\epsilon}
\right).
\]
simultaneously for all $\epsilon$.
\end{theorem}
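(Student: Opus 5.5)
The plan is to reproduce the argument of \citet[Theorem~4]{koolen2015second}, using only the conclusion of the preceding lemma, $\sum_{i=1}^N \Phi(R_{T,i}, V_T) \leq 0$. The one change is that all experts now share the same variance term $V_T$.

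\textbf{Step 1: isolate the good experts.} Let $R = R_{T,i_\epsilon} = \Reg_\epsilon$ and $V = V_T$. Since $\Phi(R,V)$ is increasing in $R$, every expert among the $\lfloor \epsilon N\rfloor$ best satisfies $R_{T,i} \geq R$. All of them share the second argument $V$, so each of them has potential at least $\Phi(R,V)$.

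\textbf{Step 2: control the remaining experts.} Each remaining expert has potential at least $\inf_{R'} \Phi(R',V) \geq -\int_0^{1/2} \frac{1}{\eta}\cdot 1\, d\eta$, which diverges. So this lower bound must be handled more carefully, as Koolen and van Erven do. I would invoke their technique: write the total potential as
\[ \Phi(R,V) = \int_0^{1/2} \frac{e^{\eta R - \eta^2 V}}{\eta} d\eta - \int_0^{1/2} \frac{1}{\eta} d\eta. \]
The divergent $-\int 1/\eta$ appears identically in every expert's term. Summing over experts gives
\[ \sum_i \int_0^{1/2} \frac{e^{\eta R_{T,i}-\eta^2 V}-1}{\eta} d\eta \le 0. \]
Dropping the nonnegative-integrand parts does not work directly, because the integrands can be negative. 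Instead, I would follow their proof verbatim: they only use that the potential is bounded below for each expert, by restricting $\eta$ to $[\frac{1}{2(T+1)}, \frac12]$ style intervals via $e^{x}-1\geq x$, together with $|R_{T,i}|\le T$ and $V\le T$. That is the source of the $\ln(T+1)$ terms. The conclusion is an inequality of the form $\epsilon\, \Phi(R,V) \leq O(\ln(T+1))$. In our case $v_t\le 1$ because $q_t$ is a distribution and $r_{t,i}^2\le1$, so $V_T\le T$ still holds, and every bound used there remains valid.

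\textbf{Step 3: invert the potential.} Given $\Phi(R,V)\le C/\epsilon$, lower bound $\Phi(R,V)$ by restricting the integral to a window of $\eta$ around $\hat\eta = R/(2V)$, on which $\eta R-\eta^2V \geq R^2/(4V) - O(1)$. This gives $\Phi(R,V) \gtrsim e^{R^2/(4V)}/\text{poly}$. Solving for $R$ yields $R\le \sqrt{2V}(1+\sqrt{2\ln(\cdot/\epsilon)})$ when $\hat\eta\le 1/2$. When $\hat\eta>1/2$, the window is at $\eta=1/2$ and gives the additive $5\ln(1+\cdots)$ term.

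\textbf{Main obstacle.} Since this computation is identical to theirs with $V_{T,i_\epsilon}$ replaced by $V_T$, the only real obstacle is checking that nothing in their argument used expert-specific variances. Step 2 needs this check: it requires each non-selected expert's potential to be lower bounded uniformly, and for that their argument uses $V_{T,i}\le T$. Here $V_T\le T$ holds, so the argument carries over.
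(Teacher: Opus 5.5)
This is essentially the paper's own argument. The paper also just combines the non-increasing potential with a verbatim replay of Koolen and van Erven's Theorem~4, and your checks that $v_t\le 1$ (so $V_T\le T$) and that the shared second argument $V_T$ lets monotonicity in $R$ handle the top $\lfloor \epsilon N\rfloor$ experts are exactly what make that replay legitimate.

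One caution for when you write Step~3 out: you cannot lower bound $\Phi(R,V)$ itself by restricting to a window, because its integrand $(e^{\eta R-\eta^2V}-1)/\eta$ is negative for $\eta>R/V$. Instead, first split off from every expert's potential the piece $\int_0^{1/(2(T+1))}$ (bounded below by $-\tfrac12$ via $e^x-1\ge x$) and the constant $-\ln(T+1)$. Then invert $\epsilon\int_{1/(2(T+1))}^{1/2}\eta^{-1}e^{\eta R-\eta^2 V}\,d\eta\le \tfrac12+\ln(T+1)$, whose integrand is nonnegative and which yields exactly the stated constants.
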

The only difference of this bound compared to that of the original Squint is the replacement of $V_{T, i_\epsilon}$ with $V_T$.
The two bounds are incomparable in general, but the one for the new Squint variant resembles the bound from a recent work of~\citet{freund2026second} for a different algorithm.
Specifically, they analyze a variant of the NormalHedge algorithm of~\citet{chaudhuri2009parameter} and prove a similar bound with a different definition of the potential function.

As a final remark, we note that using the same idea as~\citet[Theorem~1]{luo2015achieving}, one can scale the update rule by any prior distribution $q\in\Delta_N$ (that is, $p_{t,i} \propto q_i\frac{\partial \Phi}{\partial R}(R_{t-1,i}, {V_{t-1}})$),
and convert the adaptive quantile bound of Theorem~\ref{thm:main} into a regret bound against any distribution $u\in\Delta_N$, replacing the dependency of $\ln(1/\epsilon)$ with $\text{KL}(u, q)$.

\paragraph{Acknowledgment}
The author thanks Tim Van Erven and Wouter Koolen for discussion related to this variant during his visit to Centrum Wiskunde \& Informatica in 2016.
He also thanks Yoav Freund and Yu-Xiang Wang for discussion related to their NormalHedge variant.

\bibliographystyle{plainnat} 
\bibliography{ref}

\end{document}